\documentclass[a4, 11pt, parskip=half, DIV=10]{scrartcl}
\usepackage[english]{babel}
\usepackage[T1]{fontenc}
\usepackage[utf8]{inputenc}

\usepackage{graphicx}
\usepackage[numbers]{natbib}

\usepackage{subcaption}
\usepackage{booktabs}
\usepackage{url}
\usepackage{doi}
\usepackage{todonotes}
\usepackage{multirow}
\usepackage{csquotes}
\usepackage{paralist}

\usepackage{amsmath}
\usepackage{amssymb}
\usepackage{amsthm}
\usepackage{amsfonts}
\usepackage{thmtools}		
\usepackage{mleftright}
\usepackage{stmaryrd}
\usepackage{nicefrac}

\theoremstyle{definition}
\newtheorem{theorem}{Theorem}
\newtheorem{proposition}[theorem]{Proposition}
\newtheorem{lemma}[theorem]{Lemma}
\newtheorem{corollary}[theorem]{Corollary}
\newtheorem{definition}[theorem]{Definition}

\usepackage{thm-restate}
\usepackage[mathic=true]{mathtools}
\usepackage{fixmath}
\usepackage{siunitx}
\usepackage{color}

\usepackage{enumitem}
\setlist[enumerate]{itemsep=0.2ex, topsep=0.5\topsep}
\setlist[description]{itemsep=0.2ex, topsep=0.5\topsep}
\setlist[itemize]{itemsep=0.2ex, topsep=0.5\topsep}

\usepackage{setspace}
\usepackage{ellipsis}
\usepackage{xspace}
\usepackage{hfoldsty}

\usepackage{tcolorbox}
\usepackage{libertine}
\usepackage[varqu]{zi4}
\usepackage[libertine]{newtxmath}
\usepackage{bm}

\usepackage{abstract}

\makeatletter
\def\thmt@refnamewithcomma #1#2#3,#4,#5\@nil{%
	\@xa\def\csname\thmt@envname #1utorefname\endcsname{#3}%
	\ifcsname #2refname\endcsname
	\csname #2refname\expandafter\endcsname\expandafter{\thmt@envname}{#3}{#4}%
	\fi
}
\makeatother
\usepackage[capitalise,noabbrev]{cleveref}   

\newcommand{\ie}{i.e.\@\xspace}
\newcommand{\eg}{e.g.\@\xspace}
\newcommand{\cf}{cf.\@\xspace}

\newcommand{\bbRnn}[0]{\ensuremath{\mathbb{R}_{\geq0}}\xspace}

\newcommand{\bbN}[0]{\ensuremath{\mathbb{N}}\xspace}

\newcommand{\lab}[0]{\sigma}
\newcommand{\wlength}{\ell}

\DeclarePairedDelimiter\multiset{\lbrace\!\!\lbrace}{\rbrace\!\!\rbrace}%
\let\vec\bm
\newcommand{\N}{\ensuremath{\mathcal{N}}\xspace}
\DeclareMathOperator{\wlc}{\mathsf{wl}}
\DeclareMathOperator{\wc}{\mathsf{rw}}
\DeclareMathOperator{\walkp}{\mathsf{wp}}

\DeclareMathOperator{\ec}{\mathsf{ec}}
\DeclareMathOperator{\PL}{\mathsf{PL}}
\newcommand{\norm}[1]{\left\lVert#1\right\rVert}

\graphicspath{{./figures/}}

\usepackage[english,linesnumbered,vlined,nokwfunc,boxed,algo2e]{algorithm2e}
\DontPrintSemicolon
\SetKwComment{note}{$\triangleright$ }{}
\SetFuncSty{textsc}
\SetCommentSty{textit}
\SetDataSty{texttt}
\SetKwInput{Initial}{Initial}
\SetKwInput{Parameter}{Param.}
\SetKwInput{Input}{Input}
\SetKwInput{Data}{Data}
\SetKwInput{Output}{Output}
\ResetInOut{Result} 
\SetAlCapSty{}
\SetAlCapSkip{.7em}
\let\oldnl\nl%
\newcommand{\nonl}{\renewcommand{\nl}{\let\nl\oldnl}}%
\SetKw{KwAnd}{and} %
\SetKw{KwOr}{or}
\SetKw{KwXor}{xor}
\SetKw{KwNot}{not}
\SetKw{Parallel}{parallel}

\usepackage[auth-lg]{authblk}

\recalctypearea

\begin{document}
\title{Weisfeiler and Leman Go Walking: \\ Random Walk Kernels Revisited}

\author{Nils M.~Kriege}

\affil{Faculty of Computer Science, University of Vienna, Währinger Str.~29, 1090 Vienna, Austria \\
Research Network Data Science @ Uni Vienna, Kolingasse 14--16, 1090 Vienna, Austria \\
\texttt{nils.kriege@univie.ac.at}}

\date{\vspace{-30pt}}

\maketitle

\begin{abstract}
Random walk kernels have been introduced in seminal work on graph learning
and were later largely superseded by kernels based on the
Weisfeiler-Leman test for graph isomorphism. We give a unified view on
both classes of graph kernels. We study walk-based node refinement
methods and formally relate them to several widely-used techniques, 
including Morgan's algorithm for molecule canonization and the Weisfeiler-Leman test.
We define corresponding walk-based kernels on nodes that allow
fine-grained parameterized neighborhood comparison, reach
Weisfeiler-Leman expressiveness, and are computed using the kernel trick.
From this we show that classical random walk kernels with only minor modifications regarding 
definition and computation are as expressive as the widely-used Weisfeiler-Leman subtree 
kernel but support non-strict neighborhood comparison. We verify experimentally that walk-based
kernels reach or even surpass the accuracy of Weisfeiler-Leman kernels
in real-world classification tasks.
\end{abstract}

\section{Introduction}

Machine learning with graph-structured data has various applications, from bioinformatics to social network analysis to 
drug discovery and has become an established research field.
Graph kernels~\cite{Kriege2020,BorgwardtGLOR20} and graph neural networks (GNNs)~\cite{Wu2021} are two widely-used techniques for learning with graphs, the latter of which has recently received significant research interest.
Technically, various methods of both categories exploit the link between graph data and linear algebra by representing graphs by their (normalized) adjacency matrix. Such methods are often defined or can be interpreted in terms of walks. On the other hand, the Weisfeiler-Leman heuristic for graph isomorphism testing has attracted great interest in machine learning~\cite{DBLP:conf/ijcai/0001FK21,Morris2021}. This classical graph algorithm has been studied extensively in structural graph theory and logic, and its expressive power, \ie, its ability to distinguish non-isomorphic graphs, is well understood~\cite{Arvind2017}.
The Weisfeiler-Leman method turned out to be suitable to derive powerful and efficient graph kernels~\cite{Shervashidze2011}. Moreover, it is closely related to GNNs~\cite{Xu2019,Morris2019,GeertsMP21}, 
allowing to establish links, \eg, to results from logic~\cite{DBLP:conf/lics/Grohe21}.

Several results have paved the way for these insights.
The seminal work by \citet{Kersting2014} links algebraic methods and the combinatorial Weisfeiler-Leman algorithm showing that it can be understood in terms of walks and simulated by iterated matrix products.
A comprehensive view of the expressive power of linear algebra on graphs was recently given by \citet{Geerts21}.
GNNs are closely related to algebraic methods but involve activation functions and learnable weights.
\citet{Morris2019} has shown that (i) the expressive power of GNNs is limited by the Weisfeiler-Leman algorithm, and (ii) that GNN architectures exist that reach this expressive power for suitable weights.
Independently, \citet{Xu2019} obtained the same result by using injective set functions computable by multilayer perceptrons~\cite{ZaheerKRPSS17}.
Most recently, \citet{GeertsMP21} has proven that already the early GCN layer-based architectures~\cite{Kipf2017} achieve the expressive power of the Weisfeiler-Leman test with only a minor modification already mentioned in the original publication.

Random walk kernels~\cite{Gaertner2003,Kashima2003} have been the starting point of a long line of research in graph kernels, see~\cite{Kriege2020,BorgwardtGLOR20} for details.
An extension of random walk kernels is based on so-called tree patterns, which may contain repeated nodes just like walks~\citep{Ramon2003,Mahe2009}. 
A cornerstone in the development of graph kernels is the Weisfeiler-Leman graph kernel~\cite{Shervashidze2011}, which in contrast to tree pattern kernels, relies on entire node neighborhoods instead of all subsets.
This restriction allowed manageable feature vectors and led to a significant improvement in terms of both running time and classification accuracy. Several other kernels based on the Weisfeiler-Leman test have been proposed, \eg, combining it with optimal assignments~\cite{Kriege2016} or the Wasserstein distance~\cite{Togninalli2019}. 
Graph kernels inspired the development of neural modules based on random walks and Weisfeiler-Leman labels~\cite{Lei2017}. \citet{Chen2020} introduced multilayer-kernels conceptually similar to GNN layers and relates random walk and Weisfeiler-Leman kernels in this framework for graphs, where the node labels induce a unique bijection between neighbors.
\citet{Dell2018} recently established the equivalence in expressive power between tree patterns formalized as tree homomorphism counts and the Weisfeiler-Leman method. On this basis, homomorphism counts were quickly adapted for graph classification~\cite{Nguyen2020}. 
The classical Weisfeiler-Leman kernel~\cite{Shervashidze2011} as well as the works~\cite{Lei2017,Chen2020,Nguyen2020} enumerate graph features and generate (approximate) feature vectors. In contrast, the classical random walk kernel is computed using the kernel trick and thus operates implicitly in a high dimensional features space. The advantages and disadvantages of fixed-size graph embeddings are subject to recent research~\cite{Kriege2019a,Alon2021}.
Nowadays, random walk kernels are widely abandoned, while Weisfeiler-Leman kernels remain an important baseline method performing competitively on many real-world datasets~\cite{Kriege2020,BorgwardtGLOR20}.

\paragraph{Our contribution.}
We formally relate random walks in labeled graphs and the Weisfeiler-Leman method and link random walk kernels and the Weisfeiler-Leman subtree kernel. 
Starting from a combinatorial perspective, we consider walk-based label refinement and relate it to the Weisfeiler-Leman method. We extend the concept to fine-grained pairwise node similarities that satisfy the kernel properties and are computed via algebraic techniques using the kernel trick similarly to random walk kernels. 
From the node similarities, we derive a \emph{node-centric} walk kernel on graphs.
This kernel generalizes the classical random walk kernel~\cite{Gaertner2003} and allows grouping of walks by their start node. We show that grouping significantly improves the expressive power of random walk kernels and, with a minor modification, reaches the expressive power of the Weisfeiler-Leman method. 
Kernel parameters control the grouping of walks and allow to interpolate between random walk and Weisfeiler-Leman type kernels.
We verify our theoretical results on real-world graphs for which our approach reaches high accuracies, surpassing the Weisfeiler-Leman subtree kernel in some cases.

\section{Fundamentals}
We aim at establishing formal links between random walks, Weisfeiler-Leman refinement, and corresponding kernels.
We review the basics of these methods and refer the reader to recent surveys for further details~\cite{Kriege2020,Morris2021}.
We proceed by introducing the notation used.

\subsection{Definitions and notation}
An (undirected) \emph{graph} $G$ is a pair $(V,E)$ with \emph{nodes} $V$ and \emph{edges} $E \subseteq V^2$, where $(u,v)\in E \Leftrightarrow (v,u) \in E$. A graph may be endowed with a (node) \emph{labeling} $\lab\colon V \to \Sigma$.
The \emph{labels} $\Sigma$ can be arbitrary structures such as (multi)sets but are typically represented by or mapped to integers. We denote sets by $\{\cdot\}$ and multisets allowing multiple instances of the same element by $\multiset{\cdot}$.
We refer to the \emph{neighbors} of a node $u$ by $\N(u)=\{v \in V\mid (u,v)\in E\}$. Two graphs $G=(V,E)$ and $H=(V',E')$ are \emph{isomorphic}, written $G \simeq H$, if there is a bijection $\psi\colon V \to V'$ such that $(u,v)\in E \Leftrightarrow (\psi(u),\psi(v))\in E'$ for all $u,v$ in $V$. For labeled graphs, additionally $\lab(v)=\lab(\psi(v))$ must hold for all $v\in V$; for two graphs with roots $r\in V$ and $r'\in V'$, $\psi(r)=r'$ must be satisfied. The map $\psi$ is called \emph{isomorphism}. An isomorphism of a graph to itself is called \emph{automorphism}.
A labeling $\lab$ is said to \emph{refine} a labeling $\eta$, denoted by $\lab \sqsubseteq \eta$, if for all nodes $u$ and $v$, $\lab(u)=\lab(v)$ implies $\eta(u)=\eta(v)$; we write $\lab\equiv \eta$ if $\lab \sqsubseteq \eta$ and $\eta \sqsubseteq \lab$. A \emph{label refinement} is a sequence of labelings $(\lab^{(0)},\lab^{(1)}, \dots)$ such that $\lab^{(i)}$ refines $\lab^{(i-1)}$ for all $i>0$.
A function $k \colon \mathcal{X} \times \mathcal{X} \to \mathbb{R}$ is a
\emph{kernel} on $\mathcal{X}$, if there is a Hilbert space $(\mathcal{H},\langle\cdot,\cdot\rangle)$ and
a map $\phi \colon \mathcal{X} \to \mathcal{H}$, such that
$k(x,y) = \langle\phi(x), \phi(y)\rangle$ for all $x,y \in \mathcal{X}$. A kernel on the set of graphs is a \emph{graph kernel}. We denote by $k_\delta$ the Dirac kernel with $k_\delta(x,y)=1$ if $x=y$ and $0$ otherwise.
We write vectors and matrices in bold, using capital letters for the latter, and denote the column vector of ones by $\vec{1}$.

\subsection{Random walk kernels}\label{sec:basic:rw}
The classical random walk kernel proposed by \citet{Gaertner2003} compares two graphs with discrete labels by counting their \emph{common walks}, \ie, the pairs of walks with the same label sequence. Let $\mathcal{W}_i(G)$ be the set of all walks of length $i$ in a graph $G$. For a walk $w=(v_1,v_2,\dots)$ let $\lab(w) = (\lab(v_1),\lab(v_2),\dots)$ denote its label sequence.
\begin{definition}[$\wlength$-step random walk kernel]\label{def:l_step_walk}
Let $\lambda_i \in \bbRnn$ for $i \in \{0,\dots, \wlength\}$ be a sequence of weights, 
the \emph{$\wlength$-step random walk kernel} is
  \begin{equation*}\label{eq:walk-kernel}
      K_\times^\wlength(G, H) = \sum_{i=0}^\wlength \lambda_i \hspace{-.5em}
      \sum_{w \in \mathcal{W}_i(G)} \sum_{w' \in \mathcal{W}_i(H)} \hspace{-.5em}
      k_\delta(\lab(w), \lab(w')).
  \end{equation*}
\end{definition}

Random walk kernels can be computed based on a product graph using the kernel trick.
\begin{definition}[Direct product graph] \label{def:dpg}
For two labeled graphs $G=(V,E)$ and $H=(V',E')$ the \emph{direct product graph} $G \times H$ is the graph $(\mathcal{V}, \mathcal{E})$ with
 \begin{align*}
   \mathcal{V} &= \left\{ (v,v') \in V \times V' \mid \lab(v) = \lab(v') \right\}, \\
   \mathcal{E} &= \left\{ \left((u,u'),(v,v')\right) \in \mathcal{V}^2 \mid (u,v) \in E \wedge (u',v') \in E' \right\}.
 \end{align*}
\end{definition}
\begin{lemma}\label{lem:prod_one}
There is a bijection between $\mathcal{W}_i(G \times H)$ and $\{(w,w') \in \mathcal{W}_i(G) \times \mathcal{W}_i(H) \mid \lab(w)=\lab(w')\}$, $\forall i\geq0$.
\end{lemma}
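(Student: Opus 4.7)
The plan is to exhibit an explicit bijection obtained by ``unzipping'' walks in the product graph into component walks, and ``zipping'' pairs of label-equivalent walks back together.

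Concretely, I will define the map $\Phi \colon \mathcal{W}_i(G \times H) \to \{(w,w') \in \mathcal{W}_i(G) \times \mathcal{W}_i(H) \mid \lab(w)=\lab(w')\}$ as follows: a walk $\omega = ((v_1,v'_1),\dots,(v_{i+1},v'_{i+1}))$ in $G \times H$ is sent to the pair $(w,w')$ with $w=(v_1,\dots,v_{i+1})$ and $w'=(v'_1,\dots,v'_{i+1})$. The first step is to verify that $\Phi$ is well-defined. Since each $(v_j,v'_j) \in \mathcal{V}$, \Cref{def:dpg} gives $\lab(v_j) = \lab(v'_j)$, so $\lab(w) = \lab(w')$. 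Since each consecutive edge $((v_j,v'_j),(v_{j+1},v'_{j+1})) \in \mathcal{E}$, \Cref{def:dpg} gives $(v_j,v_{j+1}) \in E$ and $(v'_j,v'_{j+1}) \in E'$, so $w$ is a walk in $G$ and $w'$ a walk in $H$.

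Next, I will check injectivity, which is immediate because the node-pairs $(v_j,v'_j)$ are recoverable coordinatewise from the components $w$ and $w'$. For surjectivity, given any pair $(w,w') = ((v_1,\dots,v_{i+1}),(v'_1,\dots,v'_{i+1}))$ with $\lab(w)=\lab(w')$, define $\omega = ((v_1,v'_1),\dots,(v_{i+1},v'_{i+1}))$. Each coordinate $(v_j,v'_j)$ belongs to $\mathcal{V}$ by the label-equality assumption, and each consecutive pair belongs to $\mathcal{E}$ because $(v_j,v_{j+1}) \in E$ and $(v'_j,v'_{j+1}) \in E'$ by assumption on $w,w'$. Hence $\omega \in \mathcal{W}_i(G \times H)$ and $\Phi(\omega) = (w,w')$.

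The argument is essentially bookkeeping, so there is no serious obstacle; the only mild subtlety is handling the base case $i=0$, where walks degenerate to single nodes and the edge condition is vacuous, leaving only the label-equality condition encoded in $\mathcal{V}$. With that case covered, the bijection holds for all $i \geq 0$.
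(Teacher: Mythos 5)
Your proof is correct and is exactly the standard projection/zipping argument that \Cref{lem:prod_one} relies on; the paper itself omits an explicit proof (the result is taken from the random walk kernel literature), and your well-definedness, injectivity, and surjectivity checks, including the degenerate case $i=0$, are all sound.
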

Using \Cref{lem:prod_one}, the $\wlength$-step random walk kernel is computed from the adjacency matrix $\vec{A}_\times$ of $G \times H$ by matrix products or iterated matrix-vector multiplication as
\begin{equation*}
K^\wlength_{\times}(G, H) = \sum_{i,j}^{|\mathcal{V}|} \left[\sum_{k=0}^\wlength \lambda_k \vec{A}_\times^k \right]_{ij}
= \sum_{i=1}^{|\mathcal{V}|} \left[\sum_{k=0}^\wlength \lambda_k \vec{w}^{(k)} \right]_i
\end{equation*}
with $\vec{w}^{(k)} = \vec{A}_\times \vec{w}^{(k-1)}$ and $\vec{w}^{(0)}=\vec{1}$.
The \emph{random walk kernel} $K_{\times}$ is defined as the limit $K^\infty_{\times}$, where $\lambda=(\lambda_0, \lambda_1,\dots)$ is chosen such that the sum converges~\cite{Gaertner2003,Sugiyama2015}.
A well-studied instantiation is the \emph{geometric random walk kernel} using weights $\lambda_i = \gamma^i$, $i \in \bbN$, where $\gamma < \frac{1}{a}$ with $a \geq \min\{\Delta^-, \Delta^+\}$ and $\Delta^{+/-}$ the maximum in- and outdegree of $G \times H$ guarantees convergence~\cite{Gaertner2003}.
For this choice, the kernel can be computed in polynomial-time applying an analytical expression based on matrix inversion. Efficient methods of computation have been studied extensively~\cite{Vishwanathan2010,Kang2012}. However, theoretical and empirical results suggest that the benefit of using walks of infinite length as in the geometric random walk kernel is limited~\cite{Sugiyama2015,Kriege2019a}.
Random walk kernels can be extended to score the similarity of walks instead of requiring their labels to match exactly, making them suitable for graphs with arbitrary node and edge attributes compared by dedicated kernels~\cite{Harchaoui2007,Vishwanathan2010,Kriege2019a}.
The accuracy and running time of random walk kernels can be improved by refining vertex labels in a preprocessing step leading to sparse product graphs~\cite{Mahe2004ex,Kriege2019a,KalofoliasWV21}.

\subsection{Weisfeiler-Leman refinement and graph kernels}\label{sec:wl}
The Weisfeiler-Leman method, often referred to as \emph{color refinement}, initially assigns labels $\wlc^{(0)}(v)=\lab(v)$ to all nodes $v$ (or uniform labels if graphs are unlabeled) and then iteratively computes new labels for all $v$ in $V$ as
\begin{equation*}
  \wlc^{(i+1)}(v)=\left(\wlc^{(i)}(v),\multiset{\wlc^{(i)}(w)\mid w \in \N(v)}\right).
\end{equation*}
Convergence is reached when $\wlc^{(i)}(u)=\wlc^{(i)}(v) \Longleftrightarrow \wlc^{(i+1)}(u)=\wlc^{(i+1)}(v)$ holds for all $u,v \in V$ and we denote the corresponding stable labeling by $\wlc^{(\infty)}$.
In practice, nested multisets are avoided by applying an injective mapping to integers after every iteration. 
For all $j \geq i$, $\wlc^{(j)} \sqsubseteq \wlc^{(i)}$ holds.
The stable partition $\wlc^{(\infty)}$ is refined by the orbit labeling assigning two nodes $u$, $v$ the same label if and only if there is an automorphism $\psi$ with $\psi(u)=\psi(v)$.

In graph isomorphism testing, only the final stable labeling $\wlc^{(\infty)}$ is of interest, while in graph kernels, only the first few iterations are used. The \emph{Weisfeiler-Leman subtree kernel}~\cite{Shervashidze2011} $K^\wlength_\text{WL}$ computes the first $\wlength$ iterations and maps graphs to feature vectors counting the number of nodes in the graph for every label, which is equal to
\begin{equation*}\label{eq:wl-subtree-kernel}
      K^\wlength_\text{WL}(G, H) = \sum_{i=0}^\wlength \sum_{u\in V(G)} \sum_{v\in V(H)} k_\delta(\wlc^{(i)}(u),\wlc^{(i)}(v)).
\end{equation*}

\section{Comparing nodes by walks}
We propose to compare the nodes of one or multiple graphs according to the walks originating at them. 
We start with discrete walk labelings and characterize their ability to distinguish nodes relating the approach to classical methods and results for unlabeled graphs, namely Morgan's algorithm, walk partitions and Weisfeiler-Leman refinement. Then we focus on the pairwise node comparison and propose walk kernels on nodes, which are extended to reach the expressiveness of the Weisfeiler-Leman method.

\begin{figure}
\centering
\subcaptionbox{Graph and unfolding trees\label{fig:graphs}}{%
  \includegraphics[scale=1]{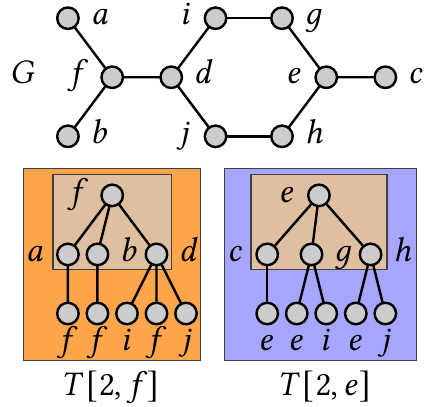}%
}\hfil
\subcaptionbox{Node labelings\label{fig:node_partition}}{%
  \includegraphics[scale=1]{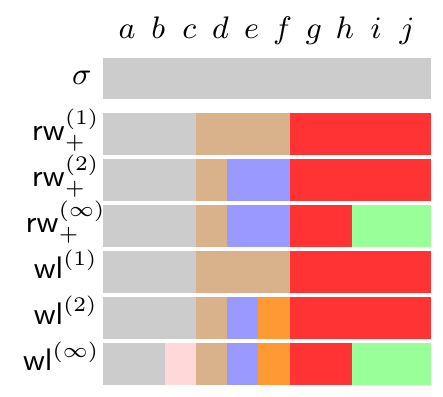}%
}\hfil
\subcaptionbox{Partition lattice\label{fig:node_partition_lattice}}{%
  \includegraphics[scale=1]{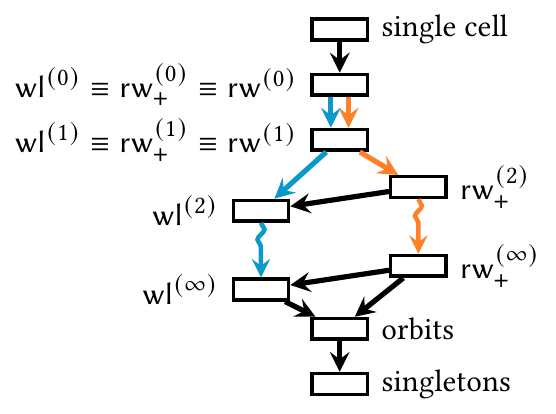}%
}
\caption{Label refinement using walks and the Weisfeiler-Leman method.
(\subref{fig:graphs}) shows an unlabeled graph $G$ and two unfolding trees of depth two rooted at node $f$ and $e$, respectively. Since the trees are not isomorphic, $\wlc^{(2)}(f)\neq\wlc^{(2)}(e)$. However, $\wc_+^{(2)}(f)=\wc_+^{(2)}(e)$, since the trees have the same number of nodes with uniform labels on each level.
(\subref{fig:node_partition}) shows the corresponding node partitions after one and two steps and at convergence (labels are represented by colors and reused in each iterations for simplicity) illustrating that $\wlc^{(\infty)} \not\equiv \wc_+^{(\infty)}$ holds for $G$.
(\subref{fig:node_partition_lattice}) shows a part of the lattice associated with label refinement relating both methods, where an arrow from $\eta$ to $\lab$ denotes $\lab \sqsubseteq \eta$.}
\label{fig:npl}
\end{figure}

\subsection{Walk labelings and their properties}
We consider graphs with discrete labels. For a node $v$ in a graph $G$, we define $\mathcal{W}_i(v)$ as the set of walks of length $i$ in $G$ starting in $v$. 
We associate with every node $v$ a label $\wc^{(i)}(v)$ representing the label sequences of length $i$ walks originating at $v$, \ie,  $\wc^{(i)}(v) = \multiset{\lab(w) \mid w \in \mathcal{W}_i(v)}$. 
This does not yield a label refinement, since $\wc^{(i+1)}$ not necessarily refines $\wc^{(i)}$.
For a counterexample consider nodes $f$ and $g$  of the graph $G$ in Figure~\ref{fig:graphs}, for which $\wc^{(1)}(f) \neq \wc^{(1)}(g)$, but $\wc^{(2)}(f) = \wc^{(2)}(g)$.
To guarantee that two nodes once labeled different will obtain different labels in all subsequent iterations, we consider all walks up to a given length $\wlength$ and define  $\mathcal{W}^+_\wlength(v) = \bigcup_{i=0}^\wlength\mathcal{W}_i(v)$.

\begin{definition}[$\wlength$-walk label]
The \emph{$\wlength$-walk label} of a node $v$ is $\wc^{(\wlength)}_+(v) = \multiset{\lab(w) \mid w \in \mathcal{W}^+_\wlength(v)}$.
\end{definition}

Since $\wc^{(i)}_+(v) \subseteq \wc^{(i+1)}_+(v)$ for all $v$ in $V$ and walks of different length have different label sequences, it immediately follows that $\wc^{(i+1)}_+ \sqsubseteq \wc^{(i)}_+$.
We say that two nodes $u$ and $v$ are \emph{$\wlength$-walk indistinguishable} if $\wc^{(\wlength)}_+(u) = \wc^{(\wlength)}_+(v)$. We call $u$ and $v$ \emph{walk indistinguishable} if they are $\infty$-walk indistinguishable.

\subsubsection{Relation to Morgan's algorithm}
\citet{Morgan1965} proposed a method to generate canonical representations for molecular graphs. To reduce ambiguities each node $v$ is endowed with its \emph{extended connectivity} $\ec(v)$.
Let $G = (V,E)$ be a (molecular) graph, initially we assign $\ec^{(1)}(v)=\deg(v)$ to every node $v$ in $V$, where $\deg(v)$ is the degree of $v$. Then, the values $\ec^{(i)}(v)$ are computed iteratively for $i\geq 2$ and all nodes $v$ in $V$ as
\begin{equation*}\label{eq:morgan}
  \ec^{(i)}(v) = \sum_{u \in \N(v)} \ec^{(i-1)}(u),
\end{equation*}
until $|\{\ec^{(i)}(v) \mid v \in V\}| \leq |\{\ec^{(i+1)}(v) \mid v \in V\}|$. Then, $\ec^{(i)}(v)$ is the final extended connectivity of the node $v$. 
For all $i\geq 1$ the extended connectivity values $\ec^{(i)}$ are equivalent to the row (or column) sums of the $i$th power of the adjacency matrix, \ie, $\vec{A}^i\vec{1}$, which gives the number of walks of length $i$ starting at each node~\cite{Razinger1982,Figueras1993}. We relate the extended connectivity to walk labelings.

\begin{proposition}\label{prop:morgan}
For $i\geq1$ and all graphs, $\wc^{(i)}\sqsubseteq \ec^{(i)}$ holds with $\wc^{(i)}\equiv \ec^{(i)}$ in the case of unlabeled graphs.
\end{proposition}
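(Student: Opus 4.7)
The plan is to observe that the multiset $\wc^{(i)}(v)$ carries strictly more information than the scalar $\ec^{(i)}(v)$, and to identify exactly what that extra information is: the labels along the walks. Concretely, my route is to show first that $\ec^{(i)}(v)$ equals the cardinality $|\mathcal{W}_i(v)|$ of the multiset defining $\wc^{(i)}(v)$, and then to read off both directions of the statement from this identification.

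First, I would prove by induction on $i \geq 1$ that $\ec^{(i)}(v) = |\mathcal{W}_i(v)|$ for every node $v$. The base case $i=1$ is immediate since $\ec^{(1)}(v) = \deg(v)$ and the walks of length one starting at $v$ are in bijection with its incident edges. For the inductive step, every walk of length $i$ starting at $v$ decomposes uniquely into a first edge $(v,u)$ and a walk of length $i-1$ starting at $u$, so $|\mathcal{W}_i(v)| = \sum_{u\in\N(v)} |\mathcal{W}_{i-1}(u)|$, which matches Morgan's recursion by the inductive hypothesis. This is essentially the row-sum interpretation $\ec^{(i)} = \vec{A}^i \vec{1}$ already noted in the text.

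Next, for the refinement $\wc^{(i)} \sqsubseteq \ec^{(i)}$, I note that the cardinality of $\wc^{(i)}(v)$ counted with multiplicity is exactly $|\mathcal{W}_i(v)|$, since the multiset is indexed by the walks themselves. Hence $\wc^{(i)}(u) = \wc^{(i)}(v)$ as multisets implies $|\mathcal{W}_i(u)| = |\mathcal{W}_i(v)|$, which by the first step gives $\ec^{(i)}(u) = \ec^{(i)}(v)$.

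Finally, for the equivalence in the unlabeled case, observe that when all nodes carry the same label every walk of length $i$ has the same label sequence, so $\wc^{(i)}(v)$ is a multiset consisting of $|\mathcal{W}_i(v)|$ copies of this common sequence. Therefore the multiset is determined up to equality by its cardinality alone, and $\ec^{(i)}(u) = \ec^{(i)}(v)$ forces $\wc^{(i)}(u) = \wc^{(i)}(v)$, yielding $\ec^{(i)} \sqsubseteq \wc^{(i)}$ and hence $\wc^{(i)} \equiv \ec^{(i)}$. There is no real obstacle here; the only point that needs minimal care is that $|\mathcal{W}_i(v)|$ is the cardinality of the multiset with multiplicities and not of its underlying set, which is why distinct walks sharing the same label sequence still contribute separately.
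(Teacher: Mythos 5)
Your proposal is correct and follows essentially the same route as the paper's proof: both rest on the identification $\ec^{(i)}(v)=|\wc^{(i)}(v)|=|\mathcal{W}_i(v)|$, deduce the refinement from equality of multiset cardinalities, and obtain the converse for unlabeled graphs because the multiset then consists of copies of a single label sequence. The only difference is that you spell out the induction behind $\ec^{(i)}(v)=|\mathcal{W}_i(v)|$, which the paper takes as already established via the row sums of $\vec{A}^i$.
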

\begin{proof}
 From the definition of walk labelings we conclude $|\wc^{(i)}(u)|=\ec^{(i)}(u)$ for all $u$ in $V$. 
 Hence, $\wc^{(i)}(u)=\wc^{(i)}(v) \Longrightarrow  \ec^{(i)}(u)=\ec^{(i)}(v)$.
 Vice versa, $\ec^{(i)}(u)=\ec^{(i)}(v)$ is a necessary condition for $\wc^{(i)}(u)=\wc^{(i)}(v)$. 
 For unlabeled graphs, all walks have the same label sequence and $\wc^{(i)}$ contains multiple instance of a single label sequence for all nodes, proving the equivalence.
\end{proof}

\subsubsection{Relation to walk partitions}
\citet{Powers1982} studied the \emph{walk partition} of a graph with adjacency matrix $\vec{A}$, which is obtained from the $n\times\wlength$ matrix $\vec{W}^{(\wlength)} = [\vec{1},\vec{A}\vec{1},\vec{A}^2\vec{1},\dots,\vec{A}^\wlength\vec{1}]$.
We define a node labeling $\walkp$, where the $i$th node is assigned its row vector, \ie, $\walkp(i)=\vec{W}^{(\wlength)}_{i,\cdot}$.
With \Cref{prop:morgan} we directly obtain the following result.

\begin{proposition}
For $i\geq0$ and all graphs, $\wc_+^{(i)}\sqsubseteq \walkp^{(i)}$ holds with $\wc_+^{(i)}\equiv \walkp^{(i)}$ in the case of unlabeled graphs.
\end{proposition}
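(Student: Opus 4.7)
The plan is to decompose $\wc_+^{(i)}$ into the length-stratified labelings $\wc^{(0)},\dots,\wc^{(i)}$ and then invoke \Cref{prop:morgan} coordinate-wise against the walk-partition vector. The decisive observation is that a walk's label sequence determines its length: a walk of length $j$ produces a sequence of $j+1$ labels. Hence the multisets $\multiset{\lab(w)\mid w\in\mathcal{W}_j(v)}$ for different $j$ live in disjoint universes, so from $\wc_+^{(i)}(v)$ one can read off $\wc^{(j)}(v)$ for each $j\in\{0,\dots,i\}$ by filtering to sequences of the appropriate length.

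From this I would first argue the refinement direction. If $\wc_+^{(i)}(u)=\wc_+^{(i)}(v)$, then by the length-filter above $\wc^{(j)}(u)=\wc^{(j)}(v)$ for every $j\le i$. By \Cref{prop:morgan} this implies $\ec^{(j)}(u)=\ec^{(j)}(v)$ for every $j\le i$. Since the row vector $\walkp^{(i)}(v)=\vec{W}^{(i)}_{v,\cdot}$ is precisely $(1,\ec^{(1)}(v),\ec^{(2)}(v),\dots,\ec^{(i)}(v))$, equality in every coordinate yields $\walkp^{(i)}(u)=\walkp^{(i)}(v)$. This is the statement $\wc_+^{(i)}\sqsubseteq\walkp^{(i)}$.

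For the unlabeled case I would run the converse through the equivalence clause of \Cref{prop:morgan}. If $\walkp^{(i)}(u)=\walkp^{(i)}(v)$, then $\ec^{(j)}(u)=\ec^{(j)}(v)$ for all $j\le i$, and in the unlabeled setting this is equivalent to $\wc^{(j)}(u)=\wc^{(j)}(v)$ for all $j\le i$. Reassembling the multisets across lengths gives $\wc_+^{(i)}(u)=\wc_+^{(i)}(v)$, so $\walkp^{(i)}\sqsubseteq\wc_+^{(i)}$, and together with the forward direction one obtains $\wc_+^{(i)}\equiv\walkp^{(i)}$.

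I do not expect a genuine obstacle here: everything reduces to the length-stratification observation plus \Cref{prop:morgan}. The only subtlety worth stating carefully is the trivial base case $j=0$, where $\wc^{(0)}$ recovers the original labels and $\walkp^{(i)}$ has a constant $1$ in its first coordinate, so this coordinate is automatically equal for all nodes and plays no role in distinguishing them (consistent with the statement, since we are comparing nodes within a single graph of uniformly labeled nodes in the unlabeled case).
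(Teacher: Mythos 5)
Your proof is correct and follows the same route the paper takes: the paper states the result as a direct consequence of \Cref{prop:morgan}, and your argument simply makes that reduction explicit by stratifying $\wc_+^{(i)}$ by walk length (using that a length-$j$ walk has a label sequence of length $j+1$) and identifying $\walkp^{(i)}(v)$ with the coordinate vector $(1,\ec^{(1)}(v),\dots,\ec^{(i)}(v))$. The length-filtering observation and the handling of the trivial first coordinate are exactly the details the paper leaves implicit.
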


For unlabeled graphs, it is known that $\wlc^{(\infty)} \sqsubseteq \walkp^{(\infty)}$ and both coincide for many graphs~\cite{Powers1982}.

\subsubsection{Relation to the Weisfeiler-Leman method}
The Weisfeiler-Leman labels encode neighborhood patterns.
The \emph{unfolding tree} $T[n,v]$ of depth $n$ at the node $v$ is defined recursively 
as the tree with root $v$ and children $\N(v)$. Each child $w \in \N(v)$ is the root of the unfolding tree $T[n-1,w]$ and $T[0,w] = (\{w\}, \emptyset)$. The labels of the original graph are preserved in the unfolding tree.

\begin{lemma}[Folklore]\label{lem:wlc}
 For $\wlength \geq 0$ and nodes $u$ and $v$, 
 $\wlc^{(\wlength)}(u)=\wlc^{(\wlength)}(v) \Longleftrightarrow T[\wlength,u] \simeq T[\wlength,v].$
\end{lemma}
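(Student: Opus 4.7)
The natural approach is induction on $\wlength$, mirroring the recursive structure shared by the Weisfeiler-Leman update and the unfolding tree construction.

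For the base case $\wlength=0$, the tree $T[0,u]=(\{u\},\emptyset)$ is a single labeled node, and $\wlc^{(0)}(u)=\lab(u)$. Two such rooted singleton trees are isomorphic iff their roots carry the same label, which is exactly the condition $\wlc^{(0)}(u)=\wlc^{(0)}(v)$.

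For the inductive step, assume the claim holds at depth $\wlength-1$ for every pair of nodes in every graph. I would first note that, by the recursive definition of $\wlc$, the equality $\wlc^{(\wlength)}(u)=\wlc^{(\wlength)}(v)$ is equivalent to the conjunction
\begin{equation*}
\wlc^{(\wlength-1)}(u)=\wlc^{(\wlength-1)}(v) \quad \text{and} \quad \multiset{\wlc^{(\wlength-1)}(w)\mid w\in\N(u)}=\multiset{\wlc^{(\wlength-1)}(w)\mid w\in\N(v)}.
\end{equation*}
Applying the induction hypothesis once to the first conjunct and termwise to the second, this is equivalent to
\begin{equation*}
T[\wlength-1,u]\simeq T[\wlength-1,v] \quad \text{and} \quad \multiset{T[\wlength-1,w]\mid w\in\N(u)}=\multiset{T[\wlength-1,w]\mid w\in\N(v)},
\end{equation*}
where the multiset equality is understood up to rooted labeled tree isomorphism.

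It then remains to show that this combined condition is equivalent to $T[\wlength,u]\simeq T[\wlength,v]$. For the backward direction, the isomorphism on $T[\wlength-1,u]$ ensures $\lab(u)=\lab(v)$, and the multiset equality yields a bijection between $\N(u)$ and $\N(v)$ that matches corresponding depth-$(\wlength-1)$ subtrees; combining the root identification with these subtree isomorphisms produces the required isomorphism of $T[\wlength,u]$ and $T[\wlength,v]$. For the forward direction, any isomorphism $\varphi\colon T[\wlength,u]\to T[\wlength,v]$ restricts to a bijection of depth-$1$ children that, together with the recursive structure, witnesses the multiset equality; moreover, pruning all nodes at depth $\wlength$ from $T[\wlength,u]$ yields $T[\wlength-1,u]$ (and similarly for $v$), so $\varphi$ also induces an isomorphism $T[\wlength-1,u]\simeq T[\wlength-1,v]$.

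I do not expect any serious obstacle, the argument is bookkeeping with multisets and rooted tree isomorphisms. The only point that requires mild care is the forward direction of the last equivalence, namely arguing cleanly that the depth-$(\wlength-1)$ truncation of $T[\wlength,u]$ at its root coincides with $T[\wlength-1,u]$, so that an isomorphism at depth $\wlength$ restricts to one at depth $\wlength-1$; this is immediate from the recursive definition but deserves explicit mention to close the induction.
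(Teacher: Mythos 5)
The paper states this lemma as folklore and gives no proof of its own, so there is nothing to compare against; your proposal supplies the standard argument. The induction on $\wlength$ is correct: the base case is immediate, the inductive step correctly unfolds the definition $\wlc^{(\wlength)}(v)=\bigl(\wlc^{(\wlength-1)}(v),\multiset{\wlc^{(\wlength-1)}(w)\mid w\in\N(v)}\bigr)$ against the characterization of rooted labeled tree isomorphism by root label plus the multiset of child subtrees up to isomorphism, and you rightly flag the one point needing care, namely that the depth-$(\wlength-1)$ truncation of $T[\wlength,u]$ at the root is $T[\wlength-1,u]$, which follows by a routine inner induction on the recursive definition. A minor streamlining you could note: the first conjunct is essentially redundant on both sides, since $\lab(u)=\lab(v)$ together with the multiset equality already pins down $T[\wlength,u]\simeq T[\wlength,v]$, so one only needs the root-label comparison rather than the full $T[\wlength-1,u]\simeq T[\wlength-1,v]$.
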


As unfolding trees also encode walks we can relate Weisfeiler-Leman and walk labelings.
For an unfolding tree $T[\wlength,v]$, let $\PL(T[\wlength,v])$ denote the set of unique paths from the root $v$ to a leaf. 
\begin{lemma}\label{lem:wc}
 Let $\wlength \geq 0$ and $v$ a node, then $\PL(T[\wlength,v])=\mathcal{W}_\wlength(v)$.
\end{lemma}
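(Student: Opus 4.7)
The plan is to prove \Cref{lem:wc} by structural induction on $\wlength$, exploiting the recursive definition of the unfolding tree.

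First I would clarify the reading of $\PL(T[\wlength,v])$: a root-to-leaf path in $T[\wlength,v]$ is a sequence of tree nodes, each of which is a copy of a node in $G$ and inherits its identity in $G$; taking the sequence of corresponding graph nodes gives an element of $V^{\wlength+1}$, and $\PL(\cdot)$ is the set of such sequences. Under this reading the claim becomes a set-equality between sequences of graph vertices, and we only need to verify (i) every root-to-leaf path yields a walk, and (ii) every walk arises from exactly one such path.

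For the base case $\wlength = 0$, the tree $T[0,v] = (\{v\},\emptyset)$ has the root as its only leaf, so the only root-to-leaf path is the length-zero sequence $(v)$, which coincides with $\mathcal{W}_0(v) = \{(v)\}$. For the inductive step, assume $\PL(T[\wlength-1,w]) = \mathcal{W}_{\wlength-1}(w)$ for every node $w$. By the recursive definition, $T[\wlength,v]$ has root $v$ with children indexed by $\N(v)$, and the subtree hanging at a child $w \in \N(v)$ is exactly $T[\wlength-1,w]$. Hence every root-to-leaf path in $T[\wlength,v]$ is obtained by prepending $v$ to a root-to-leaf path in some $T[\wlength-1,w]$ with $w \in \N(v)$; conversely, every such concatenation is a root-to-leaf path in $T[\wlength,v]$. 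Applying the induction hypothesis, the set $\PL(T[\wlength,v])$ equals
\begin{equation*}
\bigl\{(v,w_1,\dots,w_\wlength) \;\bigm|\; w_1 \in \N(v),\ (w_1,\dots,w_\wlength) \in \mathcal{W}_{\wlength-1}(w_1)\bigr\},
\end{equation*}
which is exactly $\mathcal{W}_\wlength(v)$ by definition of a walk of length $\wlength$ starting at $v$.

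The only subtle point, and the reason one has to be a little careful, is that distinct tree-paths in $T[\wlength,v]$ may project to the same sequence of graph vertices when the walk revisits nodes; however, the lemma asks only for the \emph{set} of paths, so this collapse is harmless, and the inductive argument above already operates at the level of graph-vertex sequences. I therefore do not expect any real obstacle beyond stating this identification cleanly at the start of the proof.
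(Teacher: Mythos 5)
Your proof is correct. It establishes the same underlying correspondence as the paper -- root-to-leaf paths in $T[\wlength,v]$ step from a node to one of its children, and children in the unfolding tree are exactly the graph neighbors, so these paths are precisely the walks of length $\wlength$ from $v$ -- but you package it as a structural induction on $\wlength$ following the recursive definition of $T[\wlength,v]$, whereas the paper argues both inclusions directly in two sentences (a walk $(v_0,\dots,v_\wlength)$ satisfies $v_{i+1}\in\N(v_i)$ and hence traces a root-to-leaf path, and conversely). The induction buys you a cleaner handling of the identification between tree nodes and graph nodes, which the paper leaves implicit; your closing remark about distinct tree-paths collapsing to the same vertex sequence is harmless but actually moot, since the children of any tree node are indexed by distinct neighbors, so the projection from root-to-leaf tree-paths to vertex sequences is already injective. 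Either route is a complete proof of \Cref{lem:wc}.
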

\begin{proof}
For a walk $(v=v_0, \dots, v_\wlength)$ we have $v_{i+1} \in \N(v_i)$ for all $0\leq i < \wlength$ and hence a path in $\PL(T[\wlength,v])$ exists. Vice versa, every path in $\PL(T[\wlength,v])$ is a walk of length $\wlength$ starting at $v$.
\end{proof}
Since $T[\wlength,u] \simeq T[\wlength,v]$ implies $\PL(T[\wlength,u]) = \PL(T[\wlength,v])$ for all $i \in \{0,\dots,\wlength\}$ we conclude.
\begin{proposition}
Let $\wlength \in \bbN_0$, then $ \wlc^{(\wlength)} \sqsubseteq \wc^{(\wlength)}_+ \sqsubseteq \wc^{(\wlength)}$.
\end{proposition}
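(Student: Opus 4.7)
The plan is to establish each of the two refinements separately; both follow directly from the two lemmas just proved (\Cref{lem:wlc} and \Cref{lem:wc}) together with a simple structural observation about unfolding trees and a triviality about sequence lengths.

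For $\wlc^{(\wlength)} \sqsubseteq \wc^{(\wlength)}_+$, I would start from the hypothesis $\wlc^{(\wlength)}(u)=\wlc^{(\wlength)}(v)$ and apply \Cref{lem:wlc} to obtain $T[\wlength,u] \simeq T[\wlength,v]$. The key structural observation is that an isomorphism between depth-$\wlength$ unfolding trees restricts, level by level, to isomorphisms of the shallower unfolding trees: $T[\wlength,u]\simeq T[\wlength,v]$ implies $T[i,u]\simeq T[i,v]$ for every $0\le i\le \wlength$, because $T[i,\cdot]$ is just the pruning of $T[\wlength,\cdot]$ at depth $i$ and the trees are defined recursively as $T[n,v]=(v,(T[n-1,w])_{w\in\N(v)})$ with labels preserved from the original graph. \Cref{lem:wc} applied at each depth then yields a label-preserving bijection between $\mathcal{W}_i(u)$ and $\mathcal{W}_i(v)$, so $\multiset{\lab(w)\mid w\in\mathcal{W}_i(u)}=\multiset{\lab(w)\mid w\in\mathcal{W}_i(v)}$ for all $0\le i\le \wlength$. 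Taking the multiset union over $i$ gives $\wc^{(\wlength)}_+(u)=\wc^{(\wlength)}_+(v)$.

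For $\wc^{(\wlength)}_+ \sqsubseteq \wc^{(\wlength)}$, the observation is that every element of $\wc^{(\wlength)}_+(v)$ is a label sequence whose own length equals the length of the underlying walk plus one, so sequences coming from walks of different lengths cannot collide. Hence $\wc^{(\wlength)}(v)$ is recovered from $\wc^{(\wlength)}_+(v)$ as its sub-multiset of elements of sequence length $\wlength+1$, and equality of the larger multisets specialises to equality of $\wc^{(\wlength)}(u)$ and $\wc^{(\wlength)}(v)$.

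Neither direction poses a real obstacle; the only point that deserves to be spelled out is the level-wise restriction of unfolding-tree isomorphisms, and this is immediate from the recursive definition of $T[n,v]$. I would therefore present the proof in two short paragraphs, one per containment, without any calculation.
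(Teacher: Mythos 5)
Your proposal is correct and follows essentially the same route as the paper, which justifies the proposition by the single observation that $T[\wlength,u]\simeq T[\wlength,v]$ forces $\PL(T[i,u])=\PL(T[i,v])$ for all $i\le\wlength$ (via \Cref{lem:wlc} and \Cref{lem:wc}) and that label sequences of walks of different lengths cannot coincide. You merely spell out the level-wise restriction of the tree isomorphism and the length-stratification argument that the paper leaves implicit.
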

Vice versa, for $\wlength=0$ or $\wlength=1$, $\PL(T[\wlength,u]) = \PL(T[\wlength,v])$ implies $T[\wlength,u] \simeq T[\wlength,v]$, since the unfolding tree consists of a single node or is a star graph, respectively.
\begin{proposition} \label{prop:recoding}
 It holds $\wlc^{(0)} \equiv \wc_+^{(0)} \equiv \wc^{(0)}$ and $\wlc^{(1)} \equiv \wc_+^{(1)} \equiv \wc^{(1)}$.
\end{proposition}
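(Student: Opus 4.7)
The strategy will be to leverage the preceding proposition, which already gives $\wlc^{(\wlength)} \sqsubseteq \wc_+^{(\wlength)} \sqsubseteq \wc^{(\wlength)}$ for every $\wlength$. Therefore it suffices to establish the reverse refinement $\wc^{(\wlength)} \sqsubseteq \wlc^{(\wlength)}$ in the two special cases $\wlength \in \{0,1\}$; this sandwiches $\wc_+^{(\wlength)}$ between two equivalent labelings and yields both chains of $\equiv$ at once.

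The case $\wlength = 0$ will be a direct verification: $\mathcal{W}_0(v) = \{(v)\}$, so $\wc^{(0)}(v) = \multiset{\lab(v)}$ and $\wlc^{(0)}(v) = \lab(v)$ are both in bijection with $\lab(v)$. For $\wlength = 1$ I plan to argue straight from the definitions. Since $\mathcal{W}_1(v) = \{(v,w) : w \in \N(v)\}$, we have $\wc^{(1)}(v) = \multiset{(\lab(v),\lab(w)) \mid w \in \N(v)}$; projecting onto the first coordinate recovers $\lab(v)$ and projecting onto the second coordinate yields $\multiset{\lab(w) \mid w \in \N(v)}$, so $\wc^{(1)}(v)$ carries exactly the information contained in $\wlc^{(1)}(v) = (\lab(v), \multiset{\lab(w) \mid w \in \N(v)})$. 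A cleaner alternative would route through the unfolding-tree picture: \Cref{lem:wc} makes $\wc^{(\wlength)}(v)$ a function of $\PL(T[\wlength,v])$ together with labels, and for $\wlength \le 1$ the tree $T[\wlength,v]$ is a single node or a depth-one star, which is determined up to isomorphism by its set of root-to-leaf paths; combined with \Cref{lem:wlc} this gives $\wc^{(\wlength)} \sqsubseteq \wlc^{(\wlength)}$.

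The one place I expect to need mild care is an isolated vertex, where $\mathcal{W}_1(v) = \emptyset$ and so $\wc^{(1)}(v)$ does not literally transport $\lab(v)$. This is the only real obstacle, and it is cosmetic rather than substantive: it mirrors the fact that the depth-one unfolding tree of an isolated node degenerates to a single-vertex tree, and under the standard convention identifying such a tree with its root label the equivalence goes through. With this degenerate case absorbed, the two reverse refinements together with the previous proposition deliver the three equivalences claimed.
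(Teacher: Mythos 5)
Your argument is correct and matches the paper's: the paper obtains the reverse refinement $\wc^{(\wlength)} \sqsubseteq \wlc^{(\wlength)}$ for $\wlength \in \{0,1\}$ exactly via your ``cleaner alternative,'' observing that a depth-$\leq 1$ unfolding tree is a single node or a star and hence determined up to isomorphism by its root-to-leaf paths, and then combines this with the previously established chain $\wlc^{(\wlength)} \sqsubseteq \wc_+^{(\wlength)} \sqsubseteq \wc^{(\wlength)}$. Your remark on isolated vertices flags a degenerate case the paper silently glosses over (there $\mathcal{W}_1(v)=\emptyset$ while $\PL(T[1,v])$ still contains the trivial root path carrying $\lab(v)$), but as you note it does not affect the substance of the equivalences.
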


Figure~\ref{fig:npl} illustrates the relations and shows an example, where the refinement relation is strict. 
The nodes $e$ and $f$ are walk indistinguishable but obtain different Weisfeiler-Leman labels after only two refinement steps.
In this sense the walk labeling is weaker than the Weisfeiler-Leman labeling. This does not necessarily mean that it is less suitable for learning tasks. We study the difference of the two techniques on common benchmark datasets in Section~\ref{sec:exp}.

\subsection{Pairwise node comparison}
In the previous section, two walks were considered equal if they exhibit the same label sequence and two nodes have the same walk label if the multisets of walks originating at them are equal.
We define the functions $k_\wlength$ and $k^+_\wlength$ between nodes that measure the similarity based on walks in a more fine-grained manner as
\begin{flalign}
      k^+_\wlength(u,v) = \sum_{i=0}^\wlength k_i(u,v)  \quad\text{ with }\quad
      k_\wlength(u,v)   = \sum_{w \in \mathcal{W}_\wlength(u)} \sum_{w' \in \mathcal{W}_\wlength(v)} \prod_{i=0}^\wlength k_V(u_i, v_i),  \label{eq:node-centric-walk-kernel:walks} %
\end{flalign}
where $w=(u{=}u_0, u_1, \dots, u_\wlength)$, $w'=(v{=}v_0, v_1, \dots, v_\wlength)$ and $k_V$ is a user-defined function measuring the similarity of nodes, \eg, by taking continuous attributes into account. Assuming that $k_V$ is a kernel on nodes, it follows from the concept of convolution kernels and basic closure properties~\cite{Shawe-Taylor2004} that $k^+_\wlength$ again is a kernel. We denote its feature map by $\phi^+_\wlength$ and refer the reader to \cite{Kriege2019a} for the tools to construct feature vectors.
If we assume that $k_V(u,v)=k_\delta(\lab(u),\lab(v))$, then a possible feature map $\phi^+_\wlength$ takes $v$ to a vector having a component for every possible label sequence $s$ of length at most $\wlength$ and counts the number of walks $w$ starting at $v$ with $\lab(w)=s$, \ie, it is the characteristic vector of the multiset $\wc^{(\wlength)}_+(v)$.
We obtain a new parameterized kernel that allows controlling the strictness of neighborhood comparison.
\begin{definition}[Generalized $\wlength$-walk node kernel]\label{def:gen_node_kernel}
Let $\alpha \in \bbRnn$, the \emph{generalized $\wlength$-walk node kernel} is
\begin{flalign}
   \hat{k}^+_\wlength(u,v; \alpha) &= \exp\left(-\alpha \norm{\phi^+_\wlength(u) - \phi^+_\wlength(v)}_2^2\right) \label{eq:gwn} \\
     &= \exp\left(-\alpha \left(k^+_\wlength(u,u)+k^+_\wlength(v,v)-2k^+_\wlength(u,v)\right)\right). \label{eq:gwn:kerneltrick}
\end{flalign}
\end{definition}
This is a Gaussian kernel, where the Euclidean distance is substituted by the kernel distance associated with $k^+_\wlength$, which yields a valid kernel~\cite{Haasdonk2004}. The kernel distance can be computed using the kernel trick according to \Cref{eq:gwn:kerneltrick}.

We proceed by relating the generalized $\wlength$-walk node kernel to walk labelings.
\begin{proposition}
For $\alpha>0$ and $k_V=k_\delta$, the equality $\hat{k}^+_{\wlength}(u,v; \alpha)=1$ holds if and only if $u$ and $v$ are $\wlength$-walk indistinguishable, \ie, $\wc^{(\wlength)}_+(u)=\wc^{(\wlength)}_+(v)$.
\end{proposition}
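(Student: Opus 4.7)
The plan is to reduce the statement to an identity of multisets via the explicit feature map in the Dirac case. The overall argument is a short chain of equivalences, so the work is in justifying each link cleanly rather than in any hard estimate.

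First I would exploit the monotonicity of $\exp$. Because $\alpha > 0$ and $\|\phi^+_\wlength(u) - \phi^+_\wlength(v)\|_2^2 \geq 0$, we have
\[
\hat{k}^+_\wlength(u,v;\alpha) = 1 \iff \|\phi^+_\wlength(u) - \phi^+_\wlength(v)\|_2^2 = 0 \iff \phi^+_\wlength(u) = \phi^+_\wlength(v),
\]
using that the squared norm on a Hilbert space is positive definite. This reduces the claim to showing that $\phi^+_\wlength(u) = \phi^+_\wlength(v)$ is equivalent to $\wc^{(\wlength)}_+(u) = \wc^{(\wlength)}_+(v)$.

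Next I would pin down a concrete feature map when $k_V = k_\delta$. In that case $k_V(u_i,v_i) = 1$ iff $\lab(u_i) = \lab(v_i)$, so the product in \cref{eq:node-centric-walk-kernel:walks} equals $k_\delta(\lab(w),\lab(w'))$, giving $k_\wlength(u,v) = \sum_{w \in \mathcal{W}_\wlength(u)}\sum_{w' \in \mathcal{W}_\wlength(v)} k_\delta(\lab(w),\lab(w'))$. Summing over $i = 0,\dots,\wlength$ yields the Dirac kernel on the multisets $\wc^{(\wlength)}_+(u)$ and $\wc^{(\wlength)}_+(v)$. Equivalently, as already observed just before the proposition, a valid feature map is the characteristic vector $\phi^+_\wlength(v) \in \mathbb{R}^{\Sigma^{\leq \wlength}}$ whose entry at a label sequence $s$ of length $\leq \wlength$ is the multiplicity of $s$ in $\wc^{(\wlength)}_+(v)$, namely the number of walks $w \in \mathcal{W}^+_\wlength(v)$ with $\lab(w) = s$.

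Finally I would close the loop. Two multisets are equal iff their multiplicity functions agree on every element, which is precisely $\phi^+_\wlength(u) = \phi^+_\wlength(v)$. Combining with the first step yields the proposition.

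The only subtle point is the choice of feature map: strictly speaking $\phi^+_\wlength$ is only determined up to an isometry, so I would state explicitly that we are using the canonical characteristic-vector realization of the convolution-kernel feature space for the Dirac base kernel, which is the one sketched right before the proposition statement. Apart from that, the proof is essentially a chain of "iffs" requiring no further machinery.
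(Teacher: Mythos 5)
Your proof is correct and follows essentially the same route as the paper: reduce $\hat{k}^+_\wlength(u,v;\alpha)=1$ to $\phi^+_\wlength(u)=\phi^+_\wlength(v)$ via positivity of the squared kernel distance, then identify $\phi^+_\wlength$ in the Dirac case with the characteristic vector of $\wc^{(\wlength)}_+$. Your added remark about fixing the canonical feature-map realization is a reasonable clarification but not a substantive deviation, since the kernel distance is invariant under the choice of realization.
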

\begin{proof}
 For $\alpha \neq 0$, the Gaussian kernel is one if and only if the Euclidean distance is zero, \ie, $\phi^+_\wlength(u) = \phi^+_\wlength(v)$. With the feature map of $k^+_{\wlength}$ that maps $v$ to the characteristic vector of the multiset $\wc^{(\wlength)}_+(v)$ the result follows immediately.
\end{proof}
Other normalized kernels $\tilde{k}$ satisfying $\tilde{k}(x,y)=1$ if and only if $x = y$ are known~\cite{Ah-Pine2010,Gardner2018}, but cannot be controlled with a parameter $\alpha$ in a similar way.
For $\alpha = 0$, we have $\hat{k}^+_{\wlength}(u,v)=1$. 
For $\alpha \to \infty$, we have $\hat{k}_{\wlength}(u,v)=0$ unless $u$ and $v$ are $\wlength$-walk indistinguishable.
In particular, we can relate the generalized $\wlength$-walk node kernel to walk labelings. 
\begin{corollary}\label{cor:walk_labeling}
For $\alpha\to \infty$, the image of $\hat{k}^+_{\wlength}$ is $\{0,1\}$ and 
$\hat{k}^+_{\wlength}(u,v)=1 \Longleftrightarrow \wc^{(\wlength)}_+(u)=\wc^{(\wlength)}_+(v)$.
\end{corollary}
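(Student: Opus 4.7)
The plan is to observe that this corollary is the limiting case $\alpha \to \infty$ of the immediately preceding proposition, and all the real work has already been done there. Carrying over the assumption $k_V = k_\delta$ (needed for the feature-map interpretation), I would proceed in two short steps, first establishing that only the values $0$ and $1$ can arise in the limit, and then reading off when each of them occurs.

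For the first step I would write $\hat{k}^+_\wlength(u,v;\alpha) = \exp(-\alpha\, d(u,v))$ with $d(u,v) := \norm{\phi^+_\wlength(u) - \phi^+_\wlength(v)}_2^2 \geq 0$, using the explicit form in \Cref{eq:gwn}. Two cases then suffice: if $d(u,v)=0$, the kernel equals $1$ for every $\alpha$ and a fortiori in the limit; if $d(u,v)>0$, then $\exp(-\alpha\, d(u,v)) \to 0$ as $\alpha \to \infty$. Pointwise over all pairs $(u,v)$ this shows that the image of the limiting function is contained in $\{0,1\}$, and that both values are attained (take $u=v$ versus any pair with $\wc^{(\wlength)}_+(u)\neq\wc^{(\wlength)}_+(v)$, assuming such a pair exists; otherwise the statement is trivial).

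For the second step, the case distinction above directly gives $\lim_{\alpha\to\infty} \hat{k}^+_\wlength(u,v;\alpha) = 1 \Longleftrightarrow d(u,v)=0 \Longleftrightarrow \phi^+_\wlength(u)=\phi^+_\wlength(v)$. Then I would invoke the feature-map description from the preceding proposition: under $k_V = k_\delta$, a valid feature map for $k^+_\wlength$ sends $v$ to the characteristic vector of $\wc^{(\wlength)}_+(v)$, so equality of feature vectors is equivalent to equality of the $\wlength$-walk labels. Chaining the equivalences yields the claim.

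There is no genuine obstacle here: the argument is a one-line limit of $\exp(-\alpha\, t)$ combined with a reference to the already-proved characterization of when $\phi^+_\wlength$ agrees on two nodes. The only thing to be careful about is pedantic: the statement quantifies $\alpha \to \infty$ but the image of a function of $(u,v)$, so the limit must be understood pointwise in $(u,v)$ — this is the natural reading and does not require extra work, since the exponential is monotone in $\alpha$ and each pair is treated independently.
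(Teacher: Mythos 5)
Your proposal is correct and matches the paper's route: the paper derives the corollary from the preceding proposition (Gaussian kernel equals one iff the kernel distance vanishes, combined with the feature map sending $v$ to the characteristic vector of $\wc^{(\wlength)}_+(v)$) together with the limiting observation that $\exp(-\alpha\,t)\to 0$ for $t>0$ as $\alpha\to\infty$. Your explicit case distinction on $d(u,v)=0$ versus $d(u,v)>0$ and the carried-over assumption $k_V=k_\delta$ are exactly what the paper leaves implicit.
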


\subsubsection{Computation}

Computing walk labelings for labeled graphs by enumerating walks is prohibitive.
We employ a technique based on direct product graphs introduced for the computation of random walk kernels, see \Cref{sec:basic:rw}. Using \Cref{lem:prod_one} we can compute the generalized $\wlength$-walk node kernel by counting the walks in the direct product graph with the techniques described above.
Algorithm~\ref{alg:wl_walk} realizes this approach and computes the kernel for all pairs of nodes having a non-zero value.
The vectors $\vec{w}$ and $\vec{w}^{+}$ each have one entry for each node $(u,v)$ in the direct product graph, which after the $i$th iteration stores the value of $k_i(u,v)$ and $k^+_i(u,v)$, respectively.
In Line~\ref{alg:walk} the generalized $\wlength$-walk node kernel is computed according to~\Cref{def:gen_node_kernel} based on the three values $k^+_i(u,v)$, $k^+_i(u,u)$ and $k^+_i(v,v)$ using the kernel trick.
The symmetry of kernels is reflected by symmetries in the direct product graph $G \times G$ which can be exploited to speed-up computation.
Line~\ref{alg:wl_exp} is optional and increases expressivity as discussed in~\Cref{sec:wl_epressive}.
The method of computation can be extended to incorporate vertex and edge kernels by using a weighted adjacency matrix and initializing $\vec{w}^{(0)}$ according to the vertex kernel, see \cite{Kriege2019a} for details.
Moreover, the limit $\wlength \to \infty$ can be computed by introducing suitable weights and well-known techniques proposed for random walk kernels, see \Cref{sec:basic:rw}.

\IncMargin{1.5em}
\begin{algorithm2e}[t]
    \caption{Generalized $\wlength$-walk node kernel}
    \label{alg:wl_walk}
    \Input{Graph $G=(V,E)$, parameters $\wlength$ and $\alpha$.}
    \Output{Kernel matrices $\vec{K}^{(i)}$ and $\hat{\vec{K}}^{(i)}$ storing $k_i$ and $\hat{k}^{+{\color{blue}\text{WL}}}_i$ for $i\leq\wlength$, respectively.} 
    \BlankLine
    
    $\vec{A}_\times \gets$ adjacency matrix of $G \times G$ \;

    $\vec{w} \gets \vec{1}$;\quad $\vec{w}^+ \gets \vec{w}$ \;
    \For{$i\gets1$ \KwTo $\wlength$}{
        $\vec{w} \gets \vec{A}_\times \vec{w}$ \label{alg:agreg}\;
        $\vec{w}^+ \gets \vec{w}^+ + \vec{w}$ \; 
        \ForAll{$(u,v) \in V(G\times G) $}{
            $K^{(i)}_{uv} \gets w_{uv}$ \;
            $\hat{K}^{(i)}_{uv} \gets e^{-\alpha \left(w^{+}_{uu}+w^{+}_{vv}-2w^{+}_{uv}\right)}$ \label{alg:walk} \note*[r]{Computation via kernel trick, \cf \cref{eq:gwn:kerneltrick}}
            \hspace{-.6em}{\color{blue}$\left[\text{
            $w_{uv} \gets \hat{K}^{(i)}_{uv}$
            }\right]$ \note*[r]{Reaches Weisfeiler-Leman expressiveness}}\label{alg:wl_exp}\vspace{.2em}
        }
    }
\end{algorithm2e}
\DecMargin{1.5em}

\subsubsection{Weisfeiler-Leman expressiveness}\label{sec:wl_epressive}
We can modify Algorithm~\ref{alg:wl_walk} to perform a re-encoding in every iteration to achieve Weisfeiler-Leman expressiveness for $\alpha \to \infty$ by adding Line~\ref{alg:wl_exp} (highlighted in blue).

\begin{proposition} \label{prop:wl_walks}
 For $\wlength>0$ and $\alpha \to \infty$, Algorithm~\ref{alg:wl_walk} including Line~\ref{alg:wl_exp} computes
 $\hat{k}^{+{\text{WL}}}_\wlength$ with image $\{0,1\}$ and
 $\hat{k}^{+{\text{WL}}}_\wlength(u,v) = 1 \Longleftrightarrow \wlc^{(\wlength)}(u)=\wlc^{(\wlength)}(v)$.
\end{proposition}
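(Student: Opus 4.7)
The plan is to induct on $\wlength$, tracking --- under $\alpha\to\infty$ and with Line~\ref{alg:wl_exp} active --- what the entries of $\vec{w}$ (after the re-encoding) and $\vec{w}^+$ represent at the end of each iteration. For the base case $\wlength=1$, note that $\vec{w}=\vec{1}$ before iteration~$1$, so Line~\ref{alg:wl_exp} has no effect on that iteration and $\hat{K}^{(1)}$ coincides with the ordinary generalized walk kernel $\hat{k}^+_1$; \Cref{cor:walk_labeling} then gives $\hat{K}^{(1)}_{uv}=1 \Leftrightarrow \wc^{(1)}_+(u)=\wc^{(1)}_+(v)$, and \Cref{prop:recoding} identifies this with $\wlc^{(1)}(u)=\wlc^{(1)}(v)$.

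For the induction step, I would first argue that after iteration $\wlength-1$ the overwritten vector $\vec{w}$ equals the Dirac kernel on $\wlc^{(\wlength-1)}$-labels (extending by $0$ off the product-graph vertices, which is consistent since $\wlc$ refines $\lab$). Plugging this into Line~\ref{alg:agreg} and using \Cref{def:dpg}, the new $\vec{w}$ in iteration $\wlength$ satisfies
\[
  w_{uv} \;=\; \sum_{u'\in\N(u)}\sum_{v'\in\N(v)} k_\delta\bigl(\wlc^{(\wlength-1)}(u'),\wlc^{(\wlength-1)}(v')\bigr) \;=\; \langle\psi^{(\wlength)}(u),\psi^{(\wlength)}(v)\rangle,
\]
where $\psi^{(\wlength)}(u)$ is the characteristic vector of $\multiset{\wlc^{(\wlength-1)}(u')\mid u'\in\N(u)}$. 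The key algebraic step is then to observe that across iterations $\vec{w}^+$ is the Gram matrix of the concatenated feature map $\Psi(u)=(\psi^{(0)}(u),\psi^{(1)}(u),\dots,\psi^{(\wlength)}(u))$, where $\psi^{(0)}(u)$ encodes $\lab(u)$, so the exponent in Line~\ref{alg:walk} equals $-\alpha\lVert\Psi(u)-\Psi(v)\rVert_2^2$. The limit $\alpha\to\infty$ then forces $\hat{K}^{(\wlength)}_{uv}\in\{0,1\}$ with value $1$ exactly when $\Psi(u)=\Psi(v)$.

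To conclude, I would unpack $\Psi(u)=\Psi(v)$ as $\lab(u)=\lab(v)$ together with $\multiset{\wlc^{(i-1)}(u')\mid u'\in\N(u)}=\multiset{\wlc^{(i-1)}(v')\mid v'\in\N(v)}$ for every $i=1,\dots,\wlength$, and then apply the recursive definition of $\wlc^{(i)}$ level by level to obtain $\wlc^{(\wlength)}(u)=\wlc^{(\wlength)}(v)$; the converse direction uses that the WL sequence is monotone ($\wlc^{(\wlength)}\sqsubseteq\wlc^{(i)}$ for $i\leq\wlength$). The main obstacle is the bookkeeping across iterations: one must carefully verify that Line~\ref{alg:wl_exp} truly reinitializes each iteration from a $\{0,1\}$-valued Dirac kernel on the previous WL labels, and that summing the rank-one Gram contributions across iterations corresponds to concatenating the per-depth neighbor histograms (rather than to any coarser invariant), so that the Cauchy--Schwarz interpretation of the exponent in Line~\ref{alg:walk} picks out precisely $\wlc^{(\wlength)}$-equivalence.
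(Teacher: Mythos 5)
Your proof is correct and follows essentially the same route as the paper's: an induction over the iterations, with the base case supplied by \Cref{cor:walk_labeling} and \Cref{prop:recoding}, and the inductive step showing that after the re-encoding in Line~\ref{alg:wl_exp} the vector $\vec{w}$ is the Dirac kernel on $\wlc^{(i)}$-labels and that Line~\ref{alg:agreg} then counts matching neighbors. Your Gram-matrix view of $\vec{w}^+$ is in fact a cleaner justification of a step the paper only asserts, namely that for $\alpha\to\infty$ the exponent in Line~\ref{alg:walk} vanishes exactly when $w^{+}_{uu}=w^{+}_{vv}=w^{+}_{uv}$ and that this equality propagates term by term to $w_{uu}=w_{vv}=w_{uv}$ in each iteration.
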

\begin{proof}
The construction of the direct product graph and \Cref{cor:walk_labeling} with \Cref{prop:recoding} guarantee 
that the statement is satisfied initially for $i=1$.
For $\alpha \to \infty$, $\hat{K}^{(i)}_{uv}=1$ if and only if $w^{+}_{uu} = w^{+}_{vv} = w^{+}_{uv}$ in iteration $i$, and zero otherwise.
Assume that the equivalence holds in iteration $i$, then the vector $\vec{w}$ contains ones for the node pairs $(u,v)$ with $\wlc^{(i)}(u)=\wlc^{(i)}(v)$ and $w^{+}_{uu} = w^{+}_{vv} = w^{+}_{uv}$.
Hence, in iteration $i+1$ after Line~\ref{alg:agreg}, $\vec{w}$ counts for each node pair the number of matching neighbors regarding $\wlc^{(i)}$. After adding these values to $\vec{w}^{+}$, it directly follows that, if $\wlc^{(i+1)}(u)=\wlc^{(i+1)}(v)$, then $w^{+}_{uu} = w^{+}_{vv} = w^{+}_{uv}$ and $\hat{K}^{(i+1)}_{uv}=1$ in Line~\ref{alg:walk}. Vice versa, $\hat{K}^{(i+1)}_{uv}=1$ if and only if $w^{+}_{uu} = w^{+}_{vv} = w^{+}_{uv}$, which implies $w_{uu} = w_{vv} = w_{uv}$. In combination with $\lab(u) = \lab(v)$ from \Cref{def:dpg}, we conclude $\wlc^{(i+1)}(u)=\wlc^{(i+1)}(v)$.
\end{proof}

\section{Comparing graphs by walks}\label{sec:graph_kernels}

We define a graph kernel based on the generalized $\wlength$-walk node kernel obtained from the Cartesian product of the node sets of the two input graphs. Its parameters allow controlling the strictness of neighborhood comparison and the importance of walk counts.
\begin{definition}[Node-centric $\wlength$-walk graph kernel]\label{def:ncw-graph-kernel}
Given $\alpha, \beta \in \bbRnn$, the \emph{node-centric $\wlength$-walk graph kernel} is defined as
\begin{equation}\label{eq:ncw-graph-kernel}
      K^+_\wlength(G, H; \alpha, \beta) = \sum_{i=0}^\wlength \sum_{u\in V(G)} \sum_{v\in V(H)} \hat{k}^+_i(u,v; \alpha)k^\beta_i(u,v). 
\end{equation}
\end{definition}
The node-centric $\wlength$-walk graph kernel combines the generalized $\wlength$-walk node kernel of \Cref{def:gen_node_kernel} with a polynomial of the kernel $k_i$. It can incorporate continuous attributes via a dedicated node kernel $k_V$, \cf~\Cref{eq:node-centric-walk-kernel:walks}.
We show in the following that \Cref{def:ncw-graph-kernel} resembles two widely-used graph kernels from the literature for certain parameter choices.

\subsection{Relation to random walk and Weisfeiler-Leman subtree kernels}
Random walk kernels, see \Cref{sec:basic:rw}, are closely related to the node-centric $\wlength$-walk graph kernel.
\begin{proposition}\label{prop:vcw-walk}
The node-centric $\wlength$-walk graph kernel for $\alpha=0$ and $\beta=1$ is equal to the $\wlength$-step random walk kernel with $\lambda_i=1$ for $i \in \{0,\dots,\wlength \}$, \ie, $K^+_\wlength(G, H; 0, 1)=K^\wlength_\times(G, H)$. 
\end{proposition}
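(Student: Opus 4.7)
The plan is to unpack both sides to a common expression by sequentially applying the definitions with the chosen parameters, and then collapsing walk-level sums into graph-level sums.

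First I would substitute $\alpha = 0$ into \Cref{eq:gwn}. Since $\exp(0)=1$, this makes $\hat{k}^+_i(u,v;0) = 1$ for every pair $(u,v)$ and every $i$. Similarly, $\beta = 1$ reduces $k^\beta_i(u,v)$ to $k_i(u,v)$. Hence \Cref{eq:ncw-graph-kernel} collapses to
\begin{equation*}
   K^+_\wlength(G, H; 0, 1) = \sum_{i=0}^\wlength \sum_{u\in V(G)} \sum_{v\in V(H)} k_i(u,v).
\end{equation*}

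Next I would expand $k_i(u,v)$ using its definition from \Cref{eq:node-centric-walk-kernel:walks} under the assumption $k_V = k_\delta \circ \lab$ used in this setting. The key algebraic observation is that $\prod_{j=0}^i k_\delta(\lab(u_j),\lab(v_j)) = k_\delta(\lab(w),\lab(w'))$, because a product of Dirac indicators on the individual label positions is $1$ exactly when the two label sequences coincide componentwise, \ie, $\lab(w)=\lab(w')$. This replaces the inner product by a single Dirac comparison of label sequences.

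Finally, I would use that $\{\mathcal{W}_i(u)\}_{u\in V(G)}$ forms a partition of $\mathcal{W}_i(G)$, since every walk of length $i$ has a unique start node. Therefore $\sum_{u\in V(G)} \sum_{w \in \mathcal{W}_i(u)} = \sum_{w \in \mathcal{W}_i(G)}$, and analogously for $H$. Interchanging the sum orders then yields
\begin{equation*}
   K^+_\wlength(G, H; 0, 1) = \sum_{i=0}^\wlength \sum_{w \in \mathcal{W}_i(G)} \sum_{w' \in \mathcal{W}_i(H)} k_\delta(\lab(w),\lab(w')),
\end{equation*}
which is exactly $K_\times^\wlength(G,H)$ with $\lambda_i = 1$ for all $i\in\{0,\dots,\wlength\}$ from \Cref{def:l_step_walk}. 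There is no substantial obstacle here: the whole argument is a chain of definitional unfoldings, and the only point that might deserve a sentence of justification is the reindexing step that turns a double sum over start nodes and walks into a single sum over walks.
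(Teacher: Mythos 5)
Your proof is correct and follows essentially the same route as the paper's: both reduce $\hat{k}^+_i(u,v;\alpha)k^\beta_i(u,v)$ to $k_i(u,v)$ via the parameter choice, use the identity $k_\delta(\lab(w),\lab(w'))=\prod_j k_\delta(\lab(u_j),\lab(v_j))$, and regroup the walk sums by start node. You merely spell out more explicitly the (correct) reindexing step $\sum_{u\in V(G)}\sum_{w\in\mathcal{W}_i(u)}=\sum_{w\in\mathcal{W}_i(G)}$ and the standing assumption $k_V=k_\delta$, which the paper leaves implicit.
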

\begin{proof}
 The parameter choice guarantees that $\hat{k}^+_i(u,v; \alpha)k^\beta_i(u,v) = k_i(u,v)$. Hence, this kernel sums over all pairs of walks $(w,w')$ of length less or equal to $\wlength$. Since $k_\delta(\lab(w), \lab(w')) = \prod_j k_\delta(\lab(u_j), \lab(v_j))$, where $w=(u_0, u_1, \dots)$, $w'=(v_0, v_1, \dots)$, this corresponds to \Cref{def:l_step_walk}.
\end{proof}
This result also holds for the limit $\wlength \to \infty$ if we assume that walks are adequately down-weighted by their length to guarantee convergence.

In random walk kernels a pair of vertices, that both have a large neighborhood, have a great ability to contribute strongly to the total kernel value, since a large number of walks originates at them. In the Weisfeiler-Leman subtree kernel the same pair contributes a value bounded by $\wlength$ depending on whether their neighborhood matches exactly.
We hypothesize that this conceptional difference is crucial and allows the Weisfeiler-Leman kernel to outperform previous walk-based kernels.
By setting $\alpha$ sufficiently high and $\beta=0$, it follows from \Cref{cor:walk_labeling} that the node-centric $\wlength$-walk graph kernel resembles the Weisfeiler-Leman kernel in this respect, but uses walk labelings instead. Our discussion of walk labelings and their relation to Weisfeiler-Leman labels suggest that these are only slightly less expressive.
We verify our hypothesis experimentally in~\Cref{sec:exp}. Moreover, by applying the iterated re-encoding described in \Cref{sec:wl_epressive},
we obtain exactly the same value as the Weisfeiler-Leman subtree kernel for $\alpha \to \infty$ and $\beta=0$. Other parameter choices allow for a less strict comparison, which is not possible with the Weisfeiler-Leman subtree kernel.

\subsection{Computation}\label{sec:graph_kernel:comp}
We can compute the node-centric $\wlength$-walk graph kernel for two graphs $G$ and $H$ by applying Algorithm~\ref{alg:wl_walk} to their union $G \cup H$ and plugging the results stored in the two kernel matrices $\vec{K}^{(i)}$ and $\hat{\vec{K}}^{(i)}$ into~\Cref{eq:ncw-graph-kernel}. This can be optimized by exploiting the structural properties of the direct product graph.
\begin{proposition}
 Let $G$ and $H$ be two graphs, then the direct product graph of their union $P = G \cup H$ is $P \times P = (G \times G) \cup (G \times H) \cup (H \times G) \cup (H \times H)$.
\end{proposition}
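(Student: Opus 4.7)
The plan is to prove the proposition by a direct set-theoretic decomposition of the node and edge sets of $P \times P$, using \Cref{def:dpg} and the fact that $P = G \cup H$ can be treated as a disjoint union (so that $V(P) = V(G) \sqcup V(H)$ and $E(P) = E(G) \sqcup E(H)$, with the labeling on $P$ restricting to $\lab$ on each component).

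First I would unfold the definition of the node set. By \Cref{def:dpg},
\[
V(P \times P) = \{(u,v) \in V(P) \times V(P) \mid \lab(u)=\lab(v)\}.
\]
Since $V(P) \times V(P) = (V(G) \times V(G)) \sqcup (V(G) \times V(H)) \sqcup (V(H) \times V(G)) \sqcup (V(H) \times V(H))$, the label condition $\lab(u)=\lab(v)$ can be checked on each piece independently. Intersecting with each piece yields exactly $V(G \times G)$, $V(G \times H)$, $V(H \times G)$, and $V(H \times H)$, respectively, so the node set decomposes as claimed.

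Next I would do the analogous argument for edges. An edge in $P \times P$ has the form $((u,u'),(v,v'))$ with $(u,v) \in E(P)$ and $(u',v') \in E(P)$. Because $E(P) = E(G) \sqcup E(H)$ and the two summands use disjoint node sets, the edge $(u,v)$ lies entirely in $G$ or entirely in $H$, and similarly for $(u',v')$. This gives a four-way case split matching the decomposition of $V(P \times P)$, and in each case the pair of edges is exactly an edge of the corresponding product graph $G \times G$, $G \times H$, $H \times G$, or $H \times H$. Finally I would note that these four product graphs have pairwise disjoint node sets (by construction of the disjoint union), so their union on the right-hand side is well-defined and equals $P \times P$.

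The proof is essentially bookkeeping; there is no real obstacle beyond being explicit that $G \cup H$ is meant as a disjoint union, so that vertices of $G$ and $H$ are distinguishable and the labeling on $P$ agrees with the original labelings on the components. Once this convention is fixed, both the node-set and edge-set identities follow immediately from distributing the defining conditions of \Cref{def:dpg} over the disjoint decomposition of $V(P)$ and $E(P)$.
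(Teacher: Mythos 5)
Your proof is correct and follows essentially the same route as the paper: the paper's own proof is a one-line appeal to \Cref{def:dpg} together with the disjointness of the vertex and edge sets of $G$ and $H$, which is precisely the decomposition you carry out explicitly. Your version just spells out the bookkeeping (the four-way split of $V(P)\times V(P)$ and $E(P)\times E(P)$) that the paper leaves implicit.
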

\begin{proof}
 The result follows from \cref{def:dpg} with the fact that the vertex and edge sets of $G$ and $H$ are disjoint.
\end{proof}
As there are no walks between disconnected components, we can run Algorithm~\ref{alg:wl_walk} separately on the product graphs $G\times H$, $G\times G$ and $H \times H$, ignoring $H \times G$ due to symmetry. Moreover, if we compute the kernel for all pairs of graphs in a dataset $\mathcal{D}$ of graphs, we can compute all node self-similarities from product graphs $G\times G$ for all $G \in \mathcal{D}$ once in a preprocessing step. In this case, the overhead compared to the standard $\wlength$-step random walk kernel is only minor and is mainly attributed to exponentiation.
Direct computation via the directed product graph is not suitable for large-scale graphs~\cite{Kriege2019a}.
However, existing acceleration techniques~\cite{Vishwanathan2010,Kang2012,KalofoliasWV21} remain applicable.
In particular, for unlabeled graphs (or $k_V(u,v)=1$ for all nodes $u,v$) the result can be obtained from the walk counts of the input graphs without generating a product graph.

\section{Experimental evaluation}\label{sec:exp}
We experimentally verify the hypotheses that have originated from our theoretical results.
In particular, we aim to answer the following research questions.

\begin{itemize}[topsep=0pt,itemsep=1pt,partopsep=1pt,leftmargin=2em]
	\item[\textbf{Q1}] Is the Weisfeiler-Leman subtree kernel more expressive than walk-based kernels regarding their ability to distinguish non-isomorphic graphs on common benchmark datasets?
	\item[\textbf{Q2}] Are our walk-based kernels competitive to the state-of-the-art regarding classification accuracy?
	\item[\textbf{Q3}] Which level of strictness in walk-based neighborhood comparison is most suitable?
\end{itemize}

\subsection{Experimental setup}\label{sec:exp:setup}
We describe the kernels under comparison, their parameters and the used datasets.
All experiments were performed on an Intel Xeon E5-2690v4 machine at 2.6GHz with 384 GB of RAM.

\subsubsection{Kernels}
We implemented the node-centric $\wlength$-walk kernel (NCW) of Definition~\ref{def:ncw-graph-kernel} with $\hat{k}^+_i$ and $k_i$ computed by Algorithm~\ref{alg:wl_walk} without the blue Line~\ref{alg:wl_exp} and the variant with WL expressiveness including the blue line (NCWWL). For unlabeled graphs we computed NCW without creating the direct product graph.

As a baseline we used the node label kernel (VL) and edge label kernel (EL), which are the dot products on node and edge label histograms, respectively, see~\cite{Sugiyama2015,Kriege2020}. 
For the Weisfeiler-Leman subtree kernel (WL), the $\wlength$-step random walk kernel (RW) as well as NCW and NCWWL we chose the iteration number and walk length from $\{0,\dots,5\}$ by cross-validation.
For RW, $\lambda_i=1$ for $i\in\{0,\dots,\wlength\}$ was used.
For NCW and NCWWL, we selected $\alpha$ from $\{0.01, 0.1, 1, 1000\}$ and $\beta$ from $\{0, 0.5, 1\}$. 
We have not included extensions of the WL such as~\cite{Kriege2016,Togninalli2019}, which could also be applied similarly to the node-centric $\ell$-walk graph kernel.
In addition we used a graphlet kernel (GL3), the shortest-path kernel (SP)~\citep{Borgwardt2005}, and the Graph Hopper kernel (GH)~\cite{Feragen2013}.
GL3 is based on connected subgraphs with three nodes taking labels into account similar to the approach used by~\citet{Shervashidze2011}. For SP and GH we used the Dirac kernel to compare path lengths and node labels.
We implemented the node-centric $\wlength$-walk graph kernel as well as all baselines in Java.\footnote{Our code is publicly available at \url{https://github.com/nlskrg/node_centric_walk_kernels}.}
We performed classification experiments with the $C$-SVM implementation LIBSVM~\citep{Chang2011}.
We report mean prediction accuracies and standard deviations obtained by 
$10$-fold nested cross-validation repeated $10$ times with random fold assignment.
Within each fold all necessary parameters were selected by cross-validation 
based on the training set. This includes the regularization parameter $C$ and kernel
parameters.

\subsubsection{Datasets}
We tested on widely-used graph classification benchmarks datasets of the \textsc{TuDatasets} repository~\cite{Morris2020} representing graphs from different domains.
\textsc{Mutag}, \textsc{NCI1}, \textsc{NCI109} and \textsc{Ptc-Fm} represent small molecules, 
\textsc{Enzymes} and \textsc{Proteins} are derived from macromolecules, and \textsc{Collab} and \textsc{ImdbBin} are social networks.
The datasets define binary graph classification experiments with exception of \textsc{Enzymes} and \textsc{Collab}, 
which are divided into six and three classes, respectively.
All graphs have node labels with exception of the social network graphs.
We removed edge labels, if present, since they are not supported by all graph kernel implementations.

\subsection{Results}
We discuss our results and research questions.

\begin{figure}\centering
\includegraphics[scale=.97]{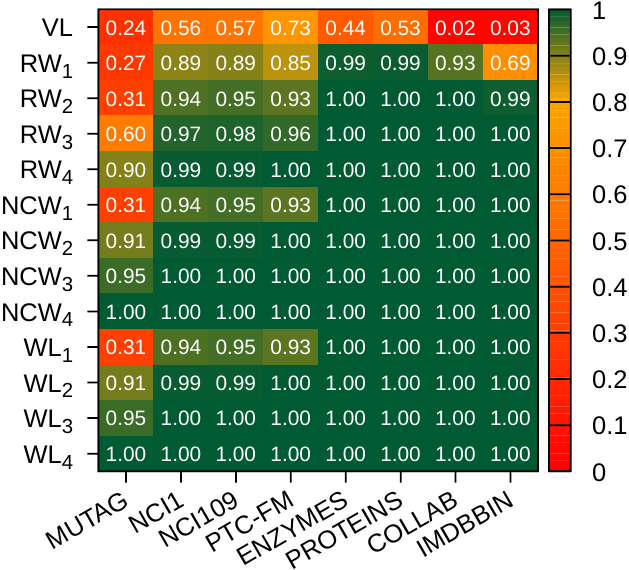}
\caption{Expressiveness of walk and Weisfeiler-Leman labelings.}
\label{fig:expressiveness}
\end{figure}

\paragraph{Q1: Expressiveness.}
We investigate the expressiveness of the $\wlength$-step random walk kernel (RW$_\wlength$), the node-centric $\wlength$-walk graph kernel (NCW$_\wlength$) with $\alpha=1000$ and $\beta=0$, and the Weisfeiler-Leman subtree kernel (WL$_\wlength$).
Since the used graph datasets contain duplicates~\cite{Ivanov2019}, we filtered them such that for each isomorphism class a single representative remains.
We computed the \emph{completeness ratio} introduced by \citet{Kriege2020}, \ie, the fraction of graphs that can be distinguished from all other graphs in the dataset.
\Cref{fig:expressiveness} confirms that with increasing parameter $\wlength$ all methods become more expressive.
For all datasets we observe that NCW$_\wlength$ is clearly more expressive than RW$_\wlength$ showing that grouping the walks according to their start node increases the expressive power. Moreover, NCW$_\wlength$ achieves the same values as WL$_\wlength$ for all $\wlength$. Although we have shown that walk labelings are weaker than Weisfeiler-Leman labelings, \cf \Cref{fig:npl}, this does not affect the ability to distinguish the considered real-world graphs.
This confirms that walks provide expressive features when grouped at a node level.

\paragraph{Q2: Accuracy.}
\Cref{tab:acc} shows the resulting accuracies and standard deviations. 
For several datasets WL outperforms RW as expected, \eg, \textsc{Nci1}, \textsc{Nci109} and \textsc{Enzymes}. In these cases NCW provides an accuracy close to WL and outperforms RW by a large margin, although both are based on walks and NCW is obtained with only a minor modification of RW. This clearly shows that node-centric walks are suitable for obtaining high accuracies. For \textsc{Enzymes}, NCW reaches an accuracy higher than WL indicating that a higher expressiveness does not necessarily lead to better generalization. 
This suggests that a less strict comparison of neighborhoods can be beneficial for some of the datasets.
NCWWL overall performs comparable to NCW but reaches slight improvements for several datasets.

\begin{table}
\caption{Average classification accuracies in percent and standard deviations. 
The highest accuracy of each dataset is marked in \textbf{bold}; 
\textsc{Oom} denotes an out of memory error.}
\label{tab:acc}
\newcommand{\sd}[1]{\scriptsize{$\pm$#1}}
\newcommand{\win}[1]{$\hspace{-0.3mm}$\textbf{#1}}
\vskip 0.15in
\begin{center}
\begin{small}
\begin{tabular}{lcccccccc}
\toprule
\multirow{2}{*}[-.3em]{\textbf{Method}}  & \multicolumn{8}{c}{\textbf{Datasets}}  \\ \cmidrule{2-9}
      & {\textsc{Mutag}}  &  {\textsc{Nci1}}      & {\textsc{Nci109}}     & {\textsc{Ptc-Fm}}    & {\textsc{Enzymes}} & {\textsc{Proteins}}   & {\textsc{ImdbBin}} & {\textsc{Collab}}    \\
\midrule
VL    & 85.4\sd{0.7}      & 64.8\sd{0.1}          & 63.6\sd{0.2}          & 58.0\sd{0.7}         & 23.6\sd{0.9}       & 72.1\sd{0.2}          & 46.7\sd{1.1}           & 56.2\sd{0.0}         \\            
EL    & 85.5\sd{0.6}      & 66.4\sd{0.1}          & 64.9\sd{0.1}          & 57.8\sd{0.7}         & 27.7\sd{0.7}       & 73.5\sd{0.3}          & 45.9\sd{0.8}           & 61.7\sd{0.2}         \\            
SP    & 83.3\sd{1.4}      & 74.3\sd{0.3}          & 73.3\sd{0.1}          & 60.4\sd{1.7}         & 39.2\sd{1.4}       & 73.8\sd{0.4}          & 48.0\sd{0.8}           & 68.2\sd{0.1}         \\            
GH    & 85.4\sd{1.9}      & 72.8\sd{0.2}          & 71.7\sd{0.3}          & 57.8\sd{1.2}         & 33.9\sd{1.0}       & 68.1\sd{0.5}          & 52.6\sd{0.8}           & 65.9\sd{0.4}         \\
GL3   & 85.2\sd{0.9}      & 70.5\sd{0.2}          & 69.3\sd{0.2}          & 57.9\sd{1.4}         & 30.1\sd{1.2}       & 72.9\sd{0.4}          & 50.7\sd{0.8}           & 66.7\sd{0.1}         \\            
WL    & 87.0\sd{1.9}      & 85.3\sd{0.3}          & 85.9\sd{0.3}          & \win{63.7}\sd{1.3}   & 53.8\sd{1.2}       & \win{75.1}\sd{0.3}    & \win{71.6}\sd{0.8}     & 79.0\sd{0.1}         \\            
RW    & \win{87.8}\sd{0.9}& 75.4\sd{0.2}          & 74.5\sd{0.1}          & 57.3\sd{1.7}         & 33.9\sd{0.9}       & 73.4\sd{0.6}          & 68.4\sd{0.5}           & 56.2\sd{0.0}         \\ \midrule   
NCW   & 86.9\sd{0.9}      & \win{85.5}\sd{0.2}    & 85.9\sd{0.2}          & 63.4\sd{1.2}         & 54.8\sd{1.0}       & 74.8\sd{0.5}          & 70.4\sd{0.8}           & \win{79.4}\sd{0.1}   \\
NCWWL & 87.1\sd{1.2}      & \win{85.5}\sd{0.2}    & \win{86.3}\sd{0.2}    & 62.3\sd{1.0}         & \win{55.2}\sd{1.2} & 74.8\sd{0.5}          & \win{71.6}\sd{0.6}     & Oom   \\
\bottomrule
\end{tabular}
\end{small}
\end{center}
\vskip -0.1in
\end{table}

\paragraph{Q3: Strict neighborhood comparison.}
NCW supports non-strict neighborhood comparison by tuning the parameter $\alpha$ and incorporates walk counts for increasing $\beta$ leading to a kernel similar to WL and the $\wlength$-step random walk kernel for the extreme cases. We investigate the selection of these parameters in the above classification experiments. For each dataset 10-fold cross-validation was repeated 10 times leading to 100 classification experiments in total, for each of which the hyperparameters were optimized. The distribution of the selected values for $\alpha$ and $\beta$ is shown in \Cref{fig:param}. We observe that the choices vary between datasets. For the social network datasets there is a clear preference for $\beta=0$, which explains the worse performance of the $\wlength$-step random walk kernels on these datasets.
For \textsc{Nci1} and \textsc{Nci109}, $\alpha \geq 1$  and $\beta \leq 0.5$ were preferred leading to kernel values closer to WL, which is in accordance with the accuracy reached.
Of particular interest is the dataset \textsc{Enzymes} for which NCW and NCWWL provide the highest accuracies.
\Cref{fig:param} shows that for this dataset, in most cases non-strict neighborhood comparison ($\alpha \leq 1$) with only a minor influence of walk counts ($\beta \leq 0.5$) was selected, a choice not possible with previous kernels. Further investigation shows that the parameter $\wlength=2$ was selected in the majority of the cases for WL, while for NCW, $\wlength=4$ with $\alpha=0.1$ or $\alpha=0.01$ was selected most frequently. This indicates that for this dataset it is preferable to take larger neighborhoods into account but to compare them less strictly.

\begin{figure}[t]
\captionsetup[subfigure]{labelformat=empty}
 \centering
\subcaptionbox{Total}{\label{fig:param:total}
\begin{tikzpicture}%
\useasboundingbox (-1,0) rectangle (1.17,-.85);
\node{\includegraphics[scale=.85]{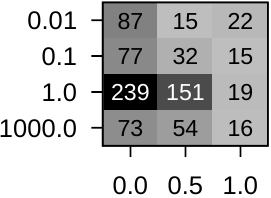}};%
\node at (-.8,-.5) {\footnotesize$\alpha$};
\node at (-.4,-.7) {\footnotesize$\beta$};
\end{tikzpicture}%
}
\subcaptionbox{\footnotesize\textsc{Mutag}}{\label{fig:param:mutag}\includegraphics[scale=.85]{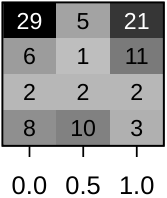}}
\subcaptionbox{\footnotesize\textsc{Nci1}}{\label{fig:param:nci1}\includegraphics[scale=.85]{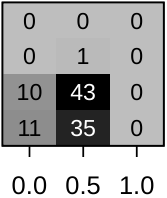}}
\subcaptionbox{\footnotesize\textsc{Nci109}}{\label{fig:param:nci109}\includegraphics[scale=.85]{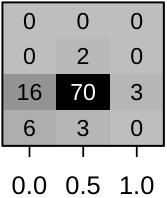}}
\subcaptionbox{\footnotesize\textsc{Ptc-Fm}}{\label{fig:param:ptcfm}\includegraphics[scale=.85]{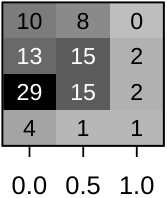}}
\subcaptionbox{\footnotesize\textsc{Enzymes}}{\label{fig:param:enzymes}\includegraphics[scale=.85]{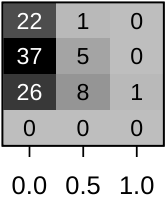}}
\subcaptionbox{\footnotesize\textsc{Proteins}}{\label{fig:param:proteins}\includegraphics[scale=.85]{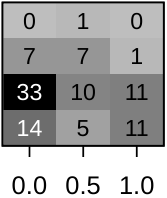}}
\subcaptionbox{\footnotesize\textsc{ImdbBin}}{\label{fig:param:imdbbinary}\includegraphics[scale=.85]{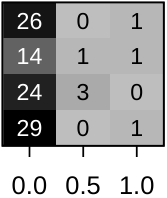}}
\subcaptionbox{\footnotesize\textsc{Collab}}{\label{fig:param:collab}\includegraphics[scale=.85]{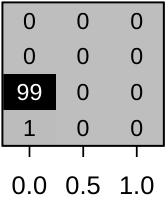}}
\caption{Selection of the parameters $\alpha$ (y-axis) and $\beta$ (x-axis) of the node-centric $\wlength$-walk graph kernel on different datasets. Note that the combination at the bottom left resembles Weisfeiler-Leman type kernels and the combination at the top right the $\wlength$-step random walk kernel.}
\label{fig:param}
\end{figure}

\section{Conclusion and future work}
We have contributed to the understanding of walk and Weisfeiler-Leman based methods showing that classical random walk kernels can be lifted to obtain the expressive power of the Weifeiler-Leman subtree kernel, drastically increasing classification accuracy.
While the direct product graph based approach is less efficient than the Weisfeiler-Leman method, its advantage is that we can control the strictness of neighborhood comparison and incorporate node and edge similarities for attributed graphs.
Improving the running time and the application to attributed graphs remains future work.

Our results have implications beyond graphs kernels.
In graph neural networks, the \emph{bottleneck problem}~\cite{Alon2021} refers to the observation that node neighborhoods grow exponentially with increasing radius and thus cannot be represented accurately by a fixed-sized feature vector. This is a possible explanation for the weakness of GNNs in long-range tasks.
The product graph based approach works with infinite-dimensional node embeddings using the kernel trick. Investigating the bottleneck problem in product graph based GNNs might shed more light into this phenomenon.

\section*{Acknowledgements}
I would like to thank Floris Geerts for pointing me to the paper on walk partitions by~\citet{Powers1982}.
I am grateful to the Vienna Science and Technology Fund (WWTF) for supporting me through project VRG19-009.

\bibliographystyle{abbrvnat}
\bibliography{wlwalks}

\end{document}